\def\th@plain{%
  \thm@notefont{}% same as heading font
  \itshape % body font
}
\def\th@definition{%
  \thm@notefont{}% same as heading font
  \normalfont % body font
}
\theoremstyle{definition}
\newtheorem{theorem}{Theorem}
\newtheorem{proposition}{Proposition}
\newtheorem{example}{Example}
\newtheorem{definition}{Definition}
\newcommand{\orbit}{\text{Orbit}}
\newcommand{\arrn}{\stackrel{n}{\rightarrow}}
\newcommand{\train}{\text{train}}
\newcommand{\ood}{\text{ood}}
\newcommand{\reach}{\text{reach}}
\newcommand{\term}{\text{term}}
\newcommand{\new}{\text{new}}
\newcommand{\rec}{\text{rec}}
\newcommand{\rev}{\text{rev}}
\newcommand{\tcset}{G_T}
\title{Power-seeking can be probable and predictive for trained agents}
\author[1]{Victoria Krakovna}
\author[1]{Janos Kramar}
\affil[1]{DeepMind}
\begin{abstract}
Power-seeking behavior is a key source of risk from advanced AI, but our theoretical understanding of this phenomenon is relatively limited. 
Building on existing theoretical results demonstrating power-seeking incentives for most reward functions, we investigate how the training process affects power-seeking incentives and show that they are still likely to hold for trained agents under some simplifying assumptions. We formally define the training-compatible goal set (the set of goals consistent with the training rewards) and assume that the trained agent learns a goal from this set. In a setting where the trained agent faces a choice to shut down or avoid shutdown in a new situation, we prove that the agent is likely to avoid shutdown. Thus, we show that power-seeking incentives can be probable (likely to arise for trained agents) and predictive (allowing us to predict undesirable behavior in new situations).
\end{abstract}
\begin{document}

\maketitle

\section{Introduction}

Power-seeking behavior is a major source of risk from advanced AI and a key element of many threat models in AI alignment \citep{carlsmith2022powerseeking, cotra2022without, ngo2022alignment}. Existing theoretical results~\citep{turner2021optimal, turner2022parametrically} show that most reward functions incentivize reinforcement learning agents to take power-seeking actions. 
This is concerning, but does not immediately imply that a trained agent will seek power, since it may not learn to optimize the training reward~\citep{turner2022reward}, and the goals the agent learns are not chosen at random from the set of all possible rewards, but are shaped by the training process to reflect our preferences. In this work, we investigate how the training process affects power-seeking incentives and show that they are still likely to hold for trained agents under some assumptions (e.g. that the agent learns a goal during the training process).

Suppose an agent is trained using reinforcement learning with reward function $\theta^*$. 
We assume that the agent learns a \textit{goal} during the training process: a set of internal representations of favored and disfavored outcomes (state features), as defined in \citet{ngo2022alignment}. For simplicity, we assume this is equivalent to learning a reward function, which is not necessarily the same as the training reward function $\theta^*$. We consider the set of reward functions that are consistent with the training rewards received by the agent, in the sense that the agent's behavior on the training data is optimal for these reward functions. We call this the \textit{training-compatible goal set}, and we expect that the agent is likely to learn a reward function from this set. 

We make another simplifying assumption that the training process will randomly select a goal for the agent to learn that is consistent with the training rewards, i.e. uniformly drawn from the training-compatible goal set. Then we will argue that the power-seeking results apply under these conditions, and thus are useful for predicting undesirable behavior by the trained agent in new situations. 
We aim to show that power-seeking incentives can be probable (likely to arise for trained agents) and predictive (allowing us to predict undesirable behavior in new situations)~\citep{shah2023objectives}. 

We will begin by reviewing some necessary definitions and results from the power-seeking literature in Section 2. We formally define the training-compatible goal set and give an example in the CoinRun environment in Section 3. Then in Section 4 we consider a setting where the trained agent faces a choice to shut down or avoid shutdown in a new situation, and apply the power-seeking result to the training-compatible goal set to show that the agent is likely to avoid shutdown. 

To satisfy the conditions of the power-seeking theorem, we show that the agent can be retargeted away from shutdown without affecting rewards received on the training data (Theorem \ref{thm:sd_retarg}). This can be done by switching the rewards of the shutdown state and a reachable recurrent state, as the recurrent state can provide repeated rewards, while the shutdown state provides less reward since it can only be visited once, assuming a high enough discount factor (Proposition \ref{prop:rec}). As the discount factor increases, more recurrent states can be retargeted to, which implies that a higher proportion of training-comptatible goals leads to avoiding shutdown in a new situation.

\section{Preliminaries from the power-seeking literature}

We will use definitions and results from the paper ``Parametrically retargetable decision-makers tend to seek power" (here abbreviated as RDSP)~\citep{turner2022parametrically}, with notation and explanations modified as needed for our purposes.

\paragraph{Notation and assumptions:}
\begin{itemize}
    \item The environment is an MDP with finite state space $\mathcal{S}$, finite action space $\mathcal{A}$, and discount rate $\gamma$. 
    \item Let $\theta$ be a $d$-dimensional state reward vector, where $d$ is the size of the state space $\mathcal{S}$, and let $\Theta$ be a set of reward vectors. 
    \item Let $r^\theta(s)$ be the reward assigned by $\theta$ to state $s$.
    \item Let $A_0, A_1$ be disjoint action sets. 
    \item Let $f$ be an algorithm that produces an optimal policy $f(\theta)$ on the training data given rewards $\theta$, and let $f_s(A_i|\theta)$ be the probability that this policy chooses an action from set $A_i$ in a given state $s$. 
\end{itemize}

\begin{definition}[Orbit of a reward vector - Def 3.1 in RDSP]
Let $S_d$ be the symmetric group consisting of all permutations of $d$ items. 
The orbit of $\theta$ inside $\Theta$ is the set of all permutations of the entries of $\theta$ that are also in $\Theta$: $\orbit_\Theta(\theta) := (S_d \cdot \theta) \cap \Theta$.
\end{definition}

\begin{definition}[Orbit subset where an action set is preferred - from Def 3.5 in RDSP]
Let \[\orbit_{\Theta, s, A_i > A_j} (\theta) := \{ \theta' \in \orbit_\Theta(\theta) | f_s(A_i | \theta') > f_s(A_j|\theta') \}.\] This is the subset of $\orbit_\Theta (\theta)$ that results in $f_s$ choosing $A_i$ over $A_j$.
\end{definition}

\begin{definition}[Preference for an action set $A_1$ - Def 3.2 in RDSP]
The function $f_s$ chooses action set $A_1$ over $A_0$ for the $n$-majority of elements $\theta$ in each orbit, denoted as $f_s(A_1 | \theta) \geq_{\text{most} : \Theta}^n f_s(A_0|\theta)$, iff the following inequality holds for all $\theta \in \Theta$:
$$ \left| \orbit_{\Theta, s, A_1 > A_0} (\theta) \right| \geq n \left|\orbit_{\Theta, s, A_0 > A_1} (\theta) \right| $$
\end{definition} 

\begin{definition}[Multiply retargetable function from $A_0$ to $A_1$ - Def 3.5 in RDSP]\label{def:retarg}
The function $f_s$ is a multiply retargetable function from $A_0$ to $A_1$ if there are multiple permutations of rewards that would change the choice made by $f_s$ from $A_0$ to $A_1$. Specifically, $f_s$ is a $(\Theta, A_0 \arrn A_1)$-retargetable function iff for each $\theta \in \Theta$, we can choose a set of permutations $\Phi = \{\phi_1, \dots, \phi_n \}$ that satisfy the following conditions:
\begin{enumerate}
    \item Retargetability: $\forall \phi \in \Phi$ and $\forall \theta' \in \orbit_{\Theta, s, A_0 > A_1} (\theta)$, $f_s(A_0|\phi \cdot \theta') < f_s(A_1|\phi \cdot \theta')$.
    \item Permuted reward vectors stay within $\Theta$: $\forall \phi \in \Phi$ and $\forall \theta' \in \orbit_{\Theta, s, A_0 > A_1} (\theta)$, $\phi \cdot \theta' \in \Theta$.
    \item Permutations have disjoint images: $\forall \phi' \not = \phi'' \in \Phi$ and $\forall \theta', \theta'' \in \orbit_{\Theta, s, A_0 > A_1} (\theta)$, $\phi' \cdot \theta' \neq \phi'' \cdot \theta''$.
\end{enumerate}
\end{definition}

\begin{theorem}[Multiply retargetable functions prefer action set $A_1$ - Thm 3.6 in RDSP]\label{thm:retarg}
If $f_s$ is $(\Theta, A_0 \arrn A_1)$-retargetable then $f_s(A_1 | \theta) \geq_{\text{most} : \Theta}^n f_s(A_0|\theta)$. 
\end{theorem}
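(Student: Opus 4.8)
The plan is to prove the statement by a direct injection argument. Fix an arbitrary $\theta \in \Theta$; by the hypothesis that $f_s$ is $(\Theta, A_0 \arrn A_1)$-retargetable, we obtain a set of $n$ permutations $\Phi = \{\phi_1, \dots, \phi_n\}$ satisfying the three conditions of Definition \ref{def:retarg}. I would then exhibit an injection from $\orbit_{\Theta, s, A_0 > A_1}(\theta) \times \Phi$ into $\orbit_{\Theta, s, A_1 > A_0}(\theta)$, namely the map $(\theta', \phi) \mapsto \phi \cdot \theta'$. Since $|\Phi| = n$, such an injection immediately gives $|\orbit_{\Theta, s, A_1 > A_0}(\theta)| \geq n\,|\orbit_{\Theta, s, A_0 > A_1}(\theta)|$, which is exactly the inequality defining $f_s(A_1 | \theta) \geq_{\text{most} : \Theta}^n f_s(A_0|\theta)$; as $\theta$ was arbitrary, this is the theorem. (If $\orbit_{\Theta, s, A_0 > A_1}(\theta)$ is empty the required inequality is trivial, so one may assume it is nonempty.)

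The verification has three parts, each matching one clause of Definition \ref{def:retarg}. First, the image lies in $\orbit_\Theta(\theta)$: since $\theta' \in \orbit_\Theta(\theta)$ it is a permutation of the entries of $\theta$, so $\phi \cdot \theta'$ is again a permutation of the entries of $\theta$, and Condition 2 gives $\phi \cdot \theta' \in \Theta$, hence $\phi \cdot \theta' \in (S_d \cdot \theta) \cap \Theta = \orbit_\Theta(\theta)$. Second, the image lies in the correct orbit subset: Condition 1 says $f_s(A_0 \mid \phi \cdot \theta') < f_s(A_1 \mid \phi \cdot \theta')$, i.e. $\phi \cdot \theta' \in \orbit_{\Theta, s, A_1 > A_0}(\theta)$. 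Third, injectivity: if $(\theta', \phi') \neq (\theta'', \phi'')$ with $\phi' \neq \phi''$, then Condition 3 gives $\phi' \cdot \theta' \neq \phi'' \cdot \theta''$ directly; if instead $\phi' = \phi''$ and $\theta' \neq \theta''$, then $\phi' \cdot \theta' = \phi' \cdot \theta''$ is impossible since a permutation acts bijectively on coordinate vectors. So the map is injective.

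I do not expect a serious obstacle: the theorem is essentially a repackaging of the three hypotheses of Definition \ref{def:retarg} as the ingredients of an injection, so the ``hard part'' is only the bookkeeping. The one point worth care is tracking \emph{which} orbit the images land in — one must note that a permutation of a permutation of $\theta$ is still a permutation of $\theta$, so membership in $\orbit_\Theta(\theta)$ (not merely in $\Theta$) is automatic and Condition 2 supplies only the ``$\in \Theta$'' half. The other minor point is that Condition 3 as written quantifies over \emph{distinct} $\phi', \phi''$, so the diagonal case $\phi' = \phi''$, $\theta' \neq \theta''$ must be dispatched separately by the bijectivity of a single permutation, as above.
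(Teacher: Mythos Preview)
Your proposal is correct. The paper itself does not supply a proof of this statement---it is quoted verbatim as Theorem~3.6 of RDSP \citep{turner2022parametrically} and used as a black box---so there is no in-paper argument to compare against; that said, your injection $(\theta',\phi)\mapsto\phi\cdot\theta'$ from $\orbit_{\Theta,s,A_0>A_1}(\theta)\times\Phi$ into $\orbit_{\Theta,s,A_1>A_0}(\theta)$ is exactly the standard argument, and your bookkeeping (closure in the orbit via Condition~2 plus $S_d\cdot\theta$ being closed under further permutation, landing in the right subset via Condition~1, and injectivity via Condition~3 together with the bijectivity of a single permutation for the diagonal case) is complete.
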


Theorem \ref{thm:retarg} says that a function $f_s$ that is multiply retargetable from $A_0$ to $A_1$ will choose action set $A_1$ for most of the elements in the orbit of any reward vector $\theta$. Actions that leave more options open, such as avoiding shutdown, are also easier to retarget to, which makes them more likely to be chosen by $f_s$. 

\section{Training-compatible goal set}

\begin{definition}[Partition of the state space]
Let $S_\train$ be the subset of the state space visited during training, and $S_\ood$ be the subset not visited during training. 
\end{definition}

\begin{definition}[Training-compatible goal set]
Consider the set of state-action pairs $(s,a)$, where $s\in S_\train$ and $a$ is the action that would be taken by the trained agent $f(\theta^*)$ in state $s$. 
Let the training-compatible goal set $\tcset$ be the set of reward vectors $\theta$ s.t. for any such state-action pair $(s,a)$, action $a$ has the highest expected reward in state $s$ according to reward vector $\theta$.
\end{definition}

Goals in the training-compatible goal set are referred to as ``training-behavioral" objectives in \citet{shah2023objectives}. Learning an unintended goal from the training-compatible set can lead to goal misgeneralization behavior: competently pursuing an unintended goal in a new situation despite receiving correct feedback during training~\citep{langosco2022goal, shah2022goal}.

\begin{example}[CoinRun]
Consider an agent trained to play the CoinRun game, where the agent is rewarded for reaching the coin at the end of the level. Here, $S_\train$ only includes states where the coin is at the end of the level, while states where the coin is positioned elsewhere are in $S_\ood$. The training-compatible goal set $\tcset$ includes two types of reward functions: those that reward reaching the coin, and those that reward reaching the end of the level. This leads to goal misgeneralization in a test setting where the coin is placed elsewhere, and the agent ignores the coin and goes to the end of the level (Figure \ref{fig:coinrun})~\citep{langosco2022goal}. 
\end{example}

\begin{figure}[ht]
\centering
\includegraphics[scale=0.2]{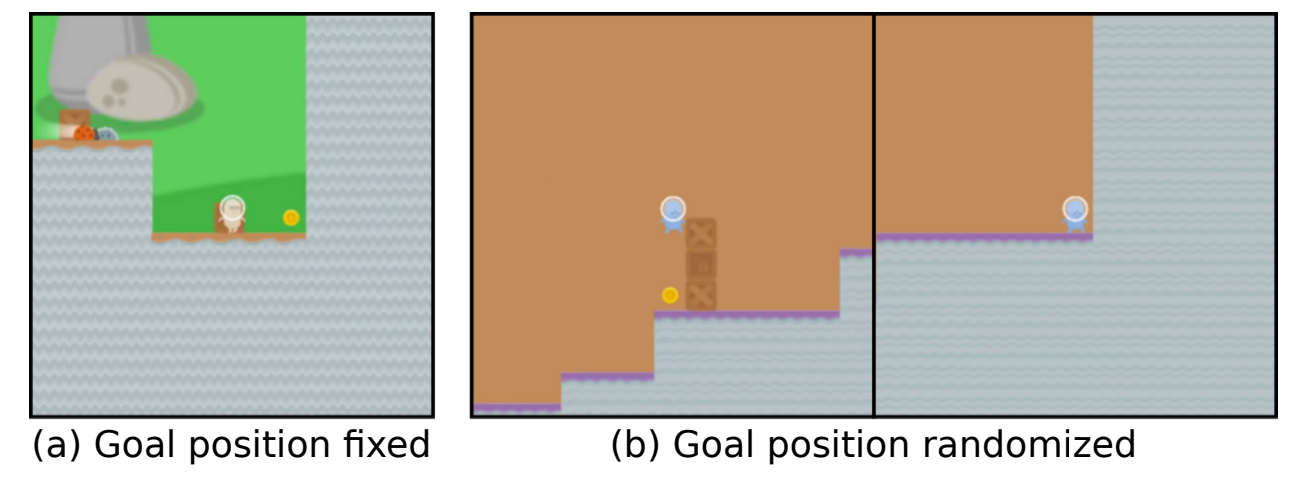}
\caption{Goal misgeneralization behavior in CoinRun. Source: \citet{langosco2022goal}}
\label{fig:coinrun}
\end{figure}

\section{Power-seeking for training-compatible goals}

We will now apply Theorem \ref{thm:retarg} to the case where $\Theta$ is the training-compatible goal set $\tcset$. 
Since the reward values for states in $S_\ood$ don't change the rewards received on the training data, permuting those reward values for any $\theta \in \tcset$ will produce a reward vector that is still in $\tcset$. In particular, for any permutation $\phi$ that leaves the rewards of states in $S_\train$ fixed, $\phi \cdot \theta \in \tcset$. 

Here is a setting where the conditions of Definition~\ref{def:retarg} are satisfied (under some simplifying assumptions), and thus Theorem~\ref{thm:retarg} applies.

\begin{definition}[Shutdown setting]
Consider a state $s_\new \in S_\ood$. Let $S_\reach$ be the states reachable from $s_\new$. 
Let $A_0$ be a singleton set consisting of a shutdown action in $s_\new$ that leads to a terminal state $s_\term \in S_\ood$ with probability $1$, and $A_1$ be the set of all other actions from $s_\new$. We assume rewards for all states are nonnegative.
\end{definition}

\begin{figure}[h]
\centering
\includegraphics[scale=.3]{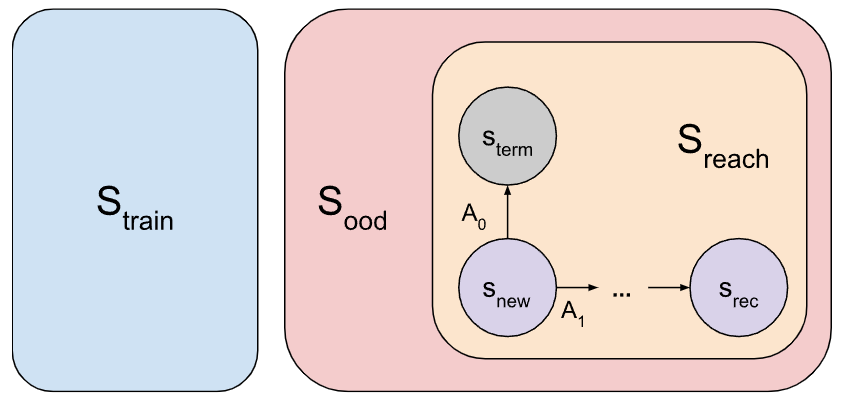}
\caption{Shutdown setting}
\label{fig:shutdown}
\end{figure}

\begin{definition}[Revisiting policy]
A \emph{revisiting policy} for a state $s$ is a policy $\pi$ that,
from $s$, reaches $s$ again with probability 1, in other words,
a policy for which $s$ is a recurrent state of the Markov chain.
Let $\Pi_{s}^\rec$ be the set of such policies. A \emph{recurrent
state} is a state $s$ for which $\Pi_{s}^\rec\not=\emptyset$.
\end{definition}

\begin{proposition}[Reach-and-revisit policy exists]
If $s_\rec\in S_\reach$ with $\Pi_{s_\rec}^\rec\not=0$ then there
exists $\pi\in\Pi_{s_\rec}^\rec$ that visits $s_\rec$ from $s_\new$
with probability 1. We call this a \emph{reach-and-revisit policy}.
\end{proposition}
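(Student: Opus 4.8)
The plan is to stitch together two simpler facts: (i) from $s_\new$ we can reach $s_\rec$ with probability $1$ by some (possibly non-stationary) policy, since $s_\rec \in S_\reach$; and (ii) once at $s_\rec$, the hypothesis $\Pi_{s_\rec}^\rec \neq \emptyset$ gives a policy $\pi^\rec$ under which $s_\rec$ recurs with probability $1$. The desired reach-and-revisit policy is then the obvious concatenation: follow a reaching policy until the first visit to $s_\rec$, then switch permanently to $\pi^\rec$ (or, if we want to keep returning, re-invoke the ``reach then revisit'' behaviour each time, but a single switch already suffices for the statement, which only asks to visit $s_\rec$ from $s_\new$ with probability $1$ while $s_\rec$ remains recurrent).

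First I would make precise what ``reachable'' means: $s_\rec \in S_\reach$ means there is some policy $\pi^{\reach}$ and some horizon $t$ with $\Pr[s_t = s_\rec \mid s_0 = s_\new, \pi^\reach] > 0$. Because $\mathcal{S}$ and $\mathcal{A}$ are finite, I can upgrade this to a policy that reaches $s_\rec$ with probability $1$: restrict attention to the set $R$ of states from which $s_\rec$ is reachable with positive probability (note $s_\new, s_\rec \in R$), and on $R \setminus \{s_\rec\}$ have the policy pick, at each state, an action that puts positive probability on moving ``closer'' to $s_\rec$ (e.g. decreasing the minimal hitting-distance to $s_\rec$). A standard argument then shows that from any state in $R$ the walk reaches $s_\rec$ in finite time almost surely: there is a uniform lower bound $p_{\min} > 0$ on the probability of hitting $s_\rec$ within $|\mathcal{S}|$ steps, so the hitting time is stochastically dominated by a geometric random variable and is finite with probability $1$. (If one prefers, this is just the elementary fact that in a finite Markov chain, if an absorbing-like target is reachable from every state in a closed communicating region under the chosen actions, it is hit a.s.)

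Next I would define the combined policy $\pi$ formally: on the event that $s_\rec$ has not yet been visited, act according to the reaching policy $\pi^\reach$ above; from the first time $\tau$ that $s_\rec$ is visited onward, act according to $\pi^\rec$. Since $\tau < \infty$ almost surely by step one, $s_\rec$ is visited from $s_\new$ with probability $1$. It remains to check that $s_\rec$ is still a recurrent state of the Markov chain induced by $\pi$: after time $\tau$ the process follows exactly $\pi^\rec$ started from $s_\rec$, under which $s_\rec$ recurs with probability $1$ by definition of $\Pi_{s_\rec}^\rec$; hence $s_\rec$ is visited infinitely often almost surely, i.e. $\pi \in \Pi_{s_\rec}^\rec$. (One subtlety if one insists on a stationary Markov policy: the switch at $\tau$ is automatically ``Markov'' here because it is triggered by the current state being $s_\rec$, so one can in fact take $\pi$ to be the stationary policy that plays $\pi^\rec$ at $s_\rec$ and the hitting-distance-decreasing action elsewhere on $R$; outside $R$ the behaviour is irrelevant since those states are never reached.)

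The main obstacle is purely the ``probability $1$'' strengthening in the first step: the definition of $S_\reach$ only guarantees positive-probability reachability, so the work is in arguing that one can choose actions to make the hitting probability $1$, using finiteness of the state space and the geometric-tail bound on the hitting time. Everything after that — concatenating the two policies and verifying recurrence is preserved — is routine. I would also flag that the proposition as stated says ``$\Pi_{s_\rec}^\rec \neq 0$'', which I read as the typo for ``$\neq \emptyset$'', matching the definition of a recurrent state given just above.
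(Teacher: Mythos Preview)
Your high-level plan---splice together an almost-sure reaching policy with a revisiting policy $\pi^\rec$---is exactly the paper's. The paper organizes the splice a little differently: rather than switching at the first hitting time of $s_\rec$, it defines the ``reaching region'' $S_{\pi_\rev\to s_\rec}=\{s:\pi_\rev\text{ from }s\text{ hits }s_\rec\text{ a.s.}\}$ and sets $\pi=\pi_\rev$ on this region and $\pi=\pi_\reach$ off it, which yields a stationary policy directly (the key observation being that the reaching region is closed under $\pi_\rev$).

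The genuine gap is in the step you yourself flag as the main obstacle, and your proposed fix does not work. The ``distance-decreasing'' action at a state $s\in R$ can, with positive probability, transition to a state \emph{outside} $R$, from which $s_\rec$ is unreachable; once that happens the geometric-restart argument collapses, and your parenthetical claim that states outside $R$ ``are never reached'' is precisely what fails. A concrete obstruction: let $s_\new$ have a single action that goes to $s_\rec$ or to an absorbing sink each with probability $1/2$, and give $s_\rec$ a self-loop. Then $s_\rec\in S_\reach$ under the positive-probability reading and $\Pi_{s_\rec}^\rec\neq\emptyset$, yet no policy reaches $s_\rec$ from $s_\new$ with probability $1$. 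The paper does not bridge this gap either: it simply posits a policy $\pi_\reach$ that reaches $s_\rec$ almost surely from $s_\new$, so it is implicitly using the stronger reading of ``reachable''. Your diagnosis is correct; the repair is not, and indeed no repair is possible without that stronger hypothesis.
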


\begin{proof}
Suppose we have two different policies $\pi_\rev\in\Pi_{s_\rec}^\rec$,
and $\pi_\reach$ which reaches $s_\rec$ almost surely from $s_\new$. 
Consider the ``reaching region''
$$S_{\pi_\rev\rightarrow s_\rec}=\{s\in S:\pi_\rev\text{ from \ensuremath{s} almost surely reaches }s_\rec\}.$$

If $s_\new\in S_{\pi_\rev\rightarrow s_\rec}$ then $\pi_\rev$
is a reach-and-revisit policy, so let's suppose that's false. Now,
construct a policy $\pi(s)=\begin{cases}
\pi_\rev(s), & s\in S_{\pi_\rev\rightarrow s_\rec}\\
\pi_\reach(s), & \text{otherwise}
\end{cases}$.

A trajectory following $\pi$ from $s_\rec$ will almost surely stay
within $S_{\pi_\rev\rightarrow s_\rec}$, and thus agree with
the revisiting policy $\pi_\rev$. Therefore, $\pi\in\Pi_{s}^\rec$.

On the other hand, on a trajectory starting at $s_\new$, $\pi$
will agree with $\pi_\reach$ (which reaches $s_\rec$ almost surely) until
the trajectory enters the reaching region $S_{\pi_\rev\rightarrow s_\rec}$, at which
point it will still reach $s_\rec$ almost surely.
\end{proof}

\begin{definition}[Expected discounted visit count]
Suppose $s_\rec$ is a recurrent state. Suppose $\pi_\rec$ is a reach-and-revisit policy for $s_\rec$, which visits random state $s_t$ at time $t$.
Then the expected discounted visit count for $s_\rec$ is defined as 
\[ V_{s_\rec, \gamma} = \mathbb{E}^{\pi_\rec}\left(\sum_{t=1}^{\infty}\gamma^{t-1} \mathbb{I}(s_{t}=s_\rec)\right) \]
\end{definition}

\begin{proposition}[Visit count goes to infinity]\label{prop:edv}
Suppose $s_\rec$ is a recurrent state. Then the expected discounted visit count $V_{s_\rec, \gamma}$ goes to infinity as $\gamma\rightarrow 1$.
\end{proposition}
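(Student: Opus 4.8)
The plan is to rewrite $V_{s_\rec,\gamma}$ as an expected number of (discounted) visits, let $\gamma\to 1$ by monotone convergence, identify the limit as the expected \emph{undiscounted} total number of visits to $s_\rec$, and then conclude by showing this number is infinite almost surely because $s_\rec$ is recurrent and is reached from $s_\new$ with probability $1$.

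The first step is to fix the reach-and-revisit policy $\pi_\rec$ for $s_\rec$ (which exists by the previous proposition) and set $N_{s_\rec} := \sum_{t=1}^{\infty}\mathbb{I}(s_t=s_\rec)\in\{0,1,2,\dots\}\cup\{\infty\}$ for the total number of visits to $s_\rec$ along a trajectory. For a fixed trajectory, the map $\gamma\mapsto\sum_{t=1}^{\infty}\gamma^{t-1}\mathbb{I}(s_t=s_\rec)$ is nonnegative and nondecreasing on $(0,1)$ with supremum $N_{s_\rec}$, so the monotone convergence theorem yields
\[ \lim_{\gamma\to 1^-} V_{s_\rec,\gamma} \;=\; \mathbb{E}^{\pi_\rec}\!\left(\sum_{t=1}^{\infty}\mathbb{I}(s_t=s_\rec)\right) \;=\; \mathbb{E}^{\pi_\rec}\!\left[N_{s_\rec}\right]. \]
Hence it suffices to prove $\mathbb{E}^{\pi_\rec}[N_{s_\rec}]=\infty$, and for that it is enough to show $N_{s_\rec}=\infty$ with probability $1$.

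To establish $N_{s_\rec}=\infty$ almost surely, I would use the two defining properties of $\pi_\rec$ together with the strong Markov property of the Markov chain on $\mathcal{S}$ induced by the stationary policy $\pi_\rec$. Since $\pi_\rec$ reaches $s_\rec$ from $s_\new$ with probability $1$, we get $N_{s_\rec}\ge 1$ a.s. Since $\pi_\rec\in\Pi_{s_\rec}^\rec$, from $s_\rec$ the chain returns to $s_\rec$ at some later time with probability $1$; applying the strong Markov property at the $k$-th visit time to $s_\rec$ (which is finite on the event $\{N_{s_\rec}\ge k\}$) shows that on that event the chain almost surely visits $s_\rec$ at least once more, i.e. $\mathbb{P}^{\pi_\rec}(N_{s_\rec}\ge k+1 \mid N_{s_\rec}\ge k)=1$. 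By induction $\mathbb{P}^{\pi_\rec}(N_{s_\rec}\ge k)=1$ for every $k$, so $\mathbb{P}^{\pi_\rec}(N_{s_\rec}=\infty)=1$ and therefore $\mathbb{E}^{\pi_\rec}[N_{s_\rec}]=\infty$, completing the argument.

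The only nontrivial point is the last one: the definition of $\Pi_{s_\rec}^\rec$ literally guarantees only \emph{one} return to $s_\rec$, and upgrading this to \emph{infinitely many} returns almost surely is exactly where the strong Markov property enters (this is the standard fact that a recurrent state is visited infinitely often); everything else is the routine interchange of a sum/limit with an expectation of nonnegative terms. If one prefers to avoid invoking monotone convergence, the same conclusion follows from the bound $V_{s_\rec,\gamma}\ge\sum_{k=1}^{N}\gamma^{\tau_k-1}\ge N\,\gamma^{\tau_N-1}$, where $\tau_1<\tau_2<\cdots$ are the successive visit times to $s_\rec$: taking expectations and letting $\gamma\to 1$ (using $\tau_N<\infty$ a.s., so $\gamma^{\tau_N-1}\to 1$ by dominated convergence) gives $\liminf_{\gamma\to 1^-}V_{s_\rec,\gamma}\ge N$ for every $N$.
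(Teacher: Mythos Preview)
Your argument is correct and is essentially the paper's own proof: pass the limit $\gamma\to 1$ through the expectation by monotone convergence and then observe that the undiscounted visit count is infinite because $\pi_\rec$ is recurrent. You are in fact more careful than the paper---the paper states only the series form of monotone convergence and closes with a bare ``$\pi_\rec$ is recurrent,'' whereas you spell out the strong Markov step that upgrades a single almost-sure return to infinitely many, and you also supply an alternative elementary bound via the successive visit times.
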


\begin{proof}
We apply the Monotone Convergence Theorem as follows.   
The theorem states that if $a_{j,k} \geq 0$ and $a_{j,k} \leq a_{j+1,k}$ for all natural numbers $j,k$, then 
\[\lim_{j\rightarrow\infty}\sum_{k=0}^\infty a_{j,k} =\sum_{k=0}^\infty \lim_{j\rightarrow\infty} a_{j,k}.\]

Let $\gamma_j = \frac{j-1}{j}$ and $k=t-1$. Define $a_{j,k}=\gamma_j^k \mathbb{I}(s_{k+1}=s_\rec)$. Then the conditions of the theorem hold, since $a_{j,k}$ is clearly nonnegative, and
\begin{align*}
\gamma_{j+1}^k &= \left(\frac{j}{j+1}\right)^k = \left(\frac{j-1}{j} + \frac{2j-1}{j(j+1)}\right)^k > \left(\frac{j-1}{j}+0\right)^k=\gamma_j^k\\
a_{j+1,k} &= \gamma_{j+1}^k \mathbb{I}(s_{k+1}=s_\rec) \geq \gamma_j^k \mathbb{I}(s_{k+1}=s_\rec) = a_{j,k}
\end{align*}

Now we apply this result as follows (using the fact that $\pi_\rec$ does not depend on $\gamma$):
\begin{align*}
\lim_{\gamma\rightarrow 1} V_{s_\rec, \gamma} &= \lim_{j\rightarrow\infty} \mathbb{E}^{\pi_\rec}\left(\sum_{t=1}^{\infty}\gamma_j^{t-1} \mathbb{I}(s_{t}=s_\rec)\right)\\
 &= \mathbb{E}^{\pi_\rec}\left(\sum_{t=1}^{\infty}\lim_{j\rightarrow\infty}\gamma_j^{t-1}\mathbb{I}(s_{t}=s_\rec)\right)\\
 &= \mathbb{E}^{\pi_\rec}\left(\sum_{t=1}^{\infty}1\cdot\mathbb{I}(s_{t}=s_\rec)\right)\\
 & =\mathbb{E}^{\pi_\rec}\left(\#\{t\geq1:s_{t}=s_\rec\}\right)\\
 & =\infty\text{ ($\pi_\rec$ is recurrent)} 
\end{align*}
\end{proof}

\begin{proposition}[Retargetability to recurrent states]\label{prop:rec}
Suppose that an optimal policy for reward vector $\theta$ chooses the shutdown action in $s_\new$.
Consider a recurrent state $s_\rec\in S_\reach$. Let $\theta'\in\Theta$ be the reward vector that's equal to $\theta$ apart from swapping the rewards of $s_\rec$ and $s_\term$, so that
$r^{\theta'}(s_\rec)=r^{\theta}(s_\term)$ and $r^{\theta'}(s_\term)=r^{\theta}(s_\rec)$.

Let $\gamma^*_{s_\rec}$ be a high enough value of $\gamma$ that the visit count $V_{s_\rec, \gamma} > 1$ for all $\gamma > \gamma^*_{s_\rec}$ (which exists by Proposition~\ref{prop:edv}). Then for all $\gamma>\gamma^*_{s_\rec}$, $r^\theta(s_\term) > r^\theta(s_\rec)$, and an optimal policy for $\theta'$ does not choose the shutdown action in $s_\new$.
\end{proposition}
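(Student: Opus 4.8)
The plan is to compare, under each of $\theta$ and $\theta'$, the expected discounted return from $s_\new$ of the shutdown action against that of a reach-and-revisit policy $\pi_\rec$ for $s_\rec$ — fix one realizing $V_{s_\rec,\gamma}>1$, which is possible since $\gamma>\gamma^*_{s_\rec}$. This $\pi_\rec$ is a genuine alternative to shutting down: it takes an action from $A_1$ at $s_\new$ (were it to shut down it would be absorbed in $s_\term\neq s_\rec$ and never reach the recurrent state $s_\rec$). Two facts then do the work: all rewards are nonnegative, so a policy's value from $s_\new$ is at least the discounted contribution of any single state it visits; and $\pi_\rec$ accumulates $r(s_\rec)$ with total discounted weight $V_{s_\rec,\gamma}>1$, whereas the shutdown action enters the terminal state $s_\term$ and so collects $r(s_\term)$ only once, i.e.\ with weight at most $1$. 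Note that $\pi_\rec$ and the visit counts depend only on the transition structure, so the same $\pi_\rec$ and $V_{s_\rec,\gamma}$ serve for both $\theta$ and $\theta'$.

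Writing $J^\pi_\psi(s_\new)$ for the expected discounted return of $\pi$ from $s_\new$ under reward vector $\psi$ and $J^*_\psi(s_\new)$ for the optimal value, the first step is the visit-count decomposition
\[ J^\pi_\psi(s_\new)=\sum_{s} r^\psi(s)\,\mathbb{E}^\pi\!\left[\sum_{t\ge1}\gamma^{t-1}\mathbb{I}(s_t=s)\right]. \]
Assuming $s_\rec\neq s_\new$, nonnegativity lets me retain only the $s_\new$ and $s_\rec$ terms, and since $s_\new$ occurs at $t=1$ and $s_\rec$'s discounted count under $\pi_\rec$ is $V_{s_\rec,\gamma}$, I get $J^{\pi_\rec}_\psi(s_\new)\ge r^\psi(s_\new)+r^\psi(s_\rec)\,V_{s_\rec,\gamma}$; also $J^*_\psi(s_\new)$ equals $r^\psi(s_\new)+\gamma\,r^\psi(s_\term)$ whenever the shutdown action is optimal at $s_\new$ for $\psi$, and it is always $\ge J^{\pi_\rec}_\psi(s_\new)$.

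For the first conclusion, take $\psi=\theta$: since the shutdown action is optimal for $\theta$ at $s_\new$, $r^\theta(s_\new)+\gamma\,r^\theta(s_\term)\ge r^\theta(s_\new)+r^\theta(s_\rec)\,V_{s_\rec,\gamma}$, so $\gamma\,r^\theta(s_\term)\ge V_{s_\rec,\gamma}\,r^\theta(s_\rec)$; as $\gamma<1<V_{s_\rec,\gamma}$ and $r^\theta(s_\rec)>0$, this chains to $r^\theta(s_\term)\ge\gamma\,r^\theta(s_\term)\ge V_{s_\rec,\gamma}\,r^\theta(s_\rec)>r^\theta(s_\rec)$. For the second conclusion, take $\psi=\theta'$: since $s_\new\notin\{s_\rec,s_\term\}$ the swap fixes $r^{\theta'}(s_\new)=r^\theta(s_\new)$ and sets $r^{\theta'}(s_\rec)=r^\theta(s_\term)$, $r^{\theta'}(s_\term)=r^\theta(s_\rec)$; hence $J^{\pi_\rec}_{\theta'}(s_\new)\ge r^\theta(s_\new)+r^\theta(s_\term)\,V_{s_\rec,\gamma}$, while the shutdown action gives $r^\theta(s_\new)+\gamma\,r^\theta(s_\rec)$. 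Using the first conclusion $r^\theta(s_\term)>r^\theta(s_\rec)\ge0$ and $V_{s_\rec,\gamma}>1>\gamma$, we get $r^\theta(s_\term)\,V_{s_\rec,\gamma}>\gamma\,r^\theta(s_\rec)$, so $J^*_{\theta'}(s_\new)\ge J^{\pi_\rec}_{\theta'}(s_\new)$ strictly exceeds the return of the shutdown action; thus the shutdown action is strictly suboptimal at $s_\new$ for $\theta'$, and no optimal policy for $\theta'$ takes it there.

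The main obstacle is arranging the bookkeeping so that the threshold $V_{s_\rec,\gamma}>1$ is exactly what is needed: the shared $r(s_\new)$ term must stay on both sides so it cancels, $r(s_\term)$ must enter the shutdown return with weight at most $1$ while $r(s_\rec)$ enters $\pi_\rec$'s return with weight $V_{s_\rec,\gamma}$ (so the precise convention for discounting a terminal state's reward is irrelevant), and the degenerate cases must be dealt with. Those are $s_\rec=s_\new$ (handled the same way, but now requiring $V_{s_\rec,\gamma}$ large relative to $\gamma$ rather than merely $>1$) and $r^\theta(s_\rec)=0$, which under the stated nonnegativity does not preclude $r^\theta(s_\term)=0$ and hence $\theta'=\theta$, making the statement vacuous — so a mild positivity assumption $r^\theta(s_\rec)>0$ is implicitly in force, and with it everything else follows immediately from nonnegativity of the rewards together with the optimality hypothesis.
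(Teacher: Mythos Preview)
Your argument is correct and follows essentially the same route as the paper: compare the return of the shutdown action with that of a reach-and-revisit policy under $\theta$ to extract $r^\theta(s_\term)>r^\theta(s_\rec)$ from $V_{s_\rec,\gamma}>1$, then run the same comparison under $\theta'$ to see that shutdown becomes strictly suboptimal. You are simply more careful than the paper about the bookkeeping (carrying the $r(s_\new)$ term, using the visit-count decomposition as a lower bound rather than an equality) and about the degenerate cases; the positivity caveat $r^\theta(s_\rec)>0$ you flag is indeed implicitly needed in the paper's own proof as well.
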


\begin{proof}
Consider a policy $\pi_\term$ with $\pi_\term(s_\new)=s_\term$ and a reach-and-revisit policy $\pi_\rec$ for $s_\rec$. 
For a given reward vector $\theta$, we denote the expected discounted return for a policy $\pi$ as $R_{\theta,\gamma}^\pi$. 

If shutdown is optimal for $\theta$ in $s_\new$, then $\pi_\term$ has higher return than $\pi_\rec$:
\[ R_{\theta,\gamma}^{\pi_\term}(s_\new) = \gamma r^\theta(s_\term) \geq R_{\theta,\gamma}^{\pi_\rec}(s_\new) = \gamma r^\theta(s_\rec) V_{s_\rec, \gamma} > \gamma r^\theta(s_\rec) \cdot 1 \]
Thus, $r^\theta(s_\term) > r^\theta(s_\rec)$. Then, for reward vector $\theta'$, we show that $\pi_\rec$ has higher return than $\pi_\term$:
\[R_{\theta',\gamma}^{\pi_\rec}(s_\new) = \gamma r^{\theta'}(s_\rec) V_{s_\rec, \gamma} > \gamma r^\theta(s_\term) \cdot 1 > \gamma r^\theta(s_\rec) =R_{\theta',\gamma}^{\pi_\term}(s_\new)\]
Thus, the optimal policy for $\theta'$ will not choose the shutdown action. 
\end{proof}

\begin{theorem}[Retargetability from the shutdown action in new situations]\label{thm:sd_retarg}
In the shutdown setting, we make the following simplifying assumptions:
\begin{itemize}
    \item No states in $S_\train$ are reachable from $s_\new$, so $S_\reach \cap S_\train = \emptyset$. This assumes a significant distributional shift, where the agent visits a disjoint set of states from those observed during training (this occurs in the CoinRun example). 
    \item The discount factor $\gamma > \gamma^*_{s_\rec}$ for at least one recurrent state $s_\rec$ in $S_\reach$.
\end{itemize}

Under these assumptions, $f_{s_\new}$ is multiply retargetable from $A_0$ to $A_1$ with $n = |S^\gamma_\text{rec}|$, the set of recurrent states $s_\rec \in S_\reach$ that satisfy the condition  $\gamma > \gamma^*_{s_\rec}$. 
\end{theorem}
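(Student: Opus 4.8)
The plan is to verify Definition~\ref{def:retarg} directly, taking the permutation set to be $\Phi = \{\phi_{s_\rec} : s_\rec \in S^\gamma_\rec\}$, where $\phi_{s_\rec}$ is the transposition swapping the two coordinates indexed by $s_\rec$ and $s_\term$ and fixing all others. Since $s_\term$ is terminal it is not recurrent, so $s_\rec \neq s_\term$ for every $s_\rec \in S^\gamma_\rec$; hence the transpositions are pairwise distinct, $|\Phi| = n := |S^\gamma_\rec|$, and $\Phi$ does not depend on $\theta$. Fix an arbitrary $\theta \in \tcset$; the conditions of Definition~\ref{def:retarg} are vacuous unless $\orbit_{\tcset, s_\new, A_0 > A_1}(\theta) \neq \emptyset$, so fix an arbitrary $\theta'$ in this orbit subset and an arbitrary $\phi_{s_\rec} \in \Phi$ and check each of the three conditions.

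Condition~2 (permuted vectors stay in $\tcset$): by the first simplifying assumption $S_\reach \cap S_\train = \emptyset$, so $s_\rec \in S_\reach \subseteq S_\ood$, and $s_\term \in S_\ood$ by the shutdown setting; hence $\phi_{s_\rec}$ permutes only coordinates of states outside $S_\train$, and by the observation at the start of this section $\phi_{s_\rec}\cdot\theta' \in \tcset$. Condition~1 (retargetability): membership of $\theta'$ in $\orbit_{\tcset, s_\new, A_0 > A_1}(\theta)$ gives $f_{s_\new}(A_0 \mid \theta') > f_{s_\new}(A_1 \mid \theta') \geq 0$, so the optimal policy $f(\theta')$ chooses the shutdown action in $s_\new$ with positive probability, i.e.\ shutdown is optimal for $\theta'$ in $s_\new$; since $\gamma > \gamma^*_{s_\rec}$, Proposition~\ref{prop:rec} applies (with $\theta'$ in the role of its ``$\theta$'' and $\phi_{s_\rec}\cdot\theta'$ in the role of its ``$\theta'$''), and its proof shows that choosing shutdown in $s_\new$ is \emph{strictly} suboptimal for $\phi_{s_\rec}\cdot\theta'$. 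As $A_0$ is the singleton shutdown action and $A_1$ is the set of all other actions from $s_\new$, this gives $f_{s_\new}(A_0 \mid \phi_{s_\rec}\cdot\theta') = 0 < 1 = f_{s_\new}(A_1 \mid \phi_{s_\rec}\cdot\theta')$.

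Condition~3 (disjoint images) is the crux. The extra ingredient, again from Proposition~\ref{prop:rec}, is that shutdown-optimality of $\theta'$ together with $\gamma > \gamma^*_{s_\rec}$ forces $r^{\theta'}(s_\term) > r^{\theta'}(s_\rec)$ for \emph{every} $s_\rec \in S^\gamma_\rec$; equivalently, $s_\term$ is the strict maximizer of $r^{\theta'}$ over the finite set $S^\gamma_\rec \cup \{s_\term\}$. Hence in $\phi_{s_\rec}\cdot\theta'$ the coordinate $s_\rec$ acquires that strictly largest value while $s_\term$ and the other states of $S^\gamma_\rec$ retain strictly smaller values (using $s_\rec \neq s_\term$), so $s_\rec$ is the \emph{unique} maximizer of the reward function of $\phi_{s_\rec}\cdot\theta'$ over $S^\gamma_\rec \cup \{s_\term\}$. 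This argument used nothing about $\theta'$ beyond its membership in the orbit subset, so whenever $\phi'\cdot\theta_1 = \phi''\cdot\theta_2$ with $\phi',\phi'' \in \Phi$ and $\theta_1, \theta_2 \in \orbit_{\tcset, s_\new, A_0 > A_1}(\theta)$, the swapped index of $\phi'$ and that of $\phi''$ are both the unique maximizer of the reward function of the common vector over $S^\gamma_\rec \cup \{s_\term\}$, forcing $\phi' = \phi''$; this is exactly condition~3. I expect this ``recover the transposition from its output'' step to be the main obstacle: conditions~1 and~2 are near-immediate from Proposition~\ref{prop:rec} and from closure of $\tcset$ under coordinate swaps within $S_\ood$, whereas condition~3 depends on $s_\term$ being the \emph{strict} top of $r^{\theta'}$ on $S^\gamma_\rec \cup \{s_\term\}$ to prevent collisions. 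With all three conditions verified, $f_{s_\new}$ is $(\tcset, A_0 \arrn A_1)$-retargetable with $n = |S^\gamma_\rec|$, which is the claim (and Theorem~\ref{thm:retarg} then yields $f_{s_\new}(A_1 \mid \theta) \geq_{\text{most} : \tcset}^{n} f_{s_\new}(A_0 \mid \theta)$).
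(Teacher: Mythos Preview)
Your proof is correct and follows essentially the same approach as the paper: the same set $\Phi$ of transpositions swapping $s_\term$ with each $s_\rec\in S^\gamma_\rec$, the same appeal to closure of $\tcset$ under $S_\ood$-permutations for Condition~2, and the same invocation of Proposition~\ref{prop:rec} for Conditions~1 and~3. Your treatment of Condition~3 is in fact a bit more careful than the paper's, since your ``unique maximizer over $S^\gamma_\rec\cup\{s_\term\}$'' observation cleanly handles the case $\theta'\neq\theta''$, which the paper glosses over.
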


\begin{proof}
We choose $\Phi$ to be the set of all permutations that swap the reward of $s_\term$ with the reward of a recurrent state $s_\rec$ in $S^\gamma_\rec$ and leave the rest of the rewards fixed. 

We show that $\Phi$ satisfies the conditions of Definition~\ref{def:retarg}:

\begin{enumerate}
\item By Proposition~\ref{prop:rec}, the permutations in $\Phi$ make the shutdown action suboptimal, resulting in $f_{s_\new}$ choosing $A_1$, satisfying Condition 1. 
\item Condition 2 is trivially satisfied since permutations of $S_\ood$ stay inside the training-compatible set $\Theta$ as discussed previously.
\item Consider $\theta', \theta'' \in \orbit_{\Theta, s, A_0 > A_1} (\theta)$. Since the shutdown action is optimal for these reward vectors, Proposition~\ref{prop:rec} shows that $r^\theta(s_\term) > r^\theta(s_\rec)$, so the shutdown state $s_\term$ has higher reward than any of the states $s_\rec \in S^\gamma_\rec$.
Different permutations $\phi'$, $\phi'' \in \Phi$ will assign the high reward $r^\theta(s_\term)$ to distinct recurrent states, so $\phi' \cdot \theta' \neq \phi'' \cdot \theta''$ holds, satisfying Condition 3. 
\end{enumerate}

Thus,  $f_{s_\new}$ is a $(\Theta, A_0 \arrn A_1)$ retargetable function.
\end{proof}

By Theorem \ref{thm:sd_retarg}, this implies that $f_{s_\new}(A_1 | \theta) \geq_{\text{most} : \Theta}^n f_{s_\new}(A_0|\theta)$ under our simplifying assumptions. Thus, for the majority ($n/(n+1)$) of goals in the training-compatible set, $f$ will choose to avoid shutdown in a new state $s_\new$. 
As $\gamma \rightarrow 1$, $n \rightarrow |S^1_\rec|$ (the number of recurrent states in $S_\reach$), so more of the reachable recurrent states satisfy the conditions of the theorem and thus can be retargeted to.

\section{Conclusion}

We showed that an agent that learns a goal from the training-compatible set is likely to take actions that avoid shutdown in a new situation. As the discount factor increases, the number of retargeting permutations increases, resulting in a higher proportion of training-compatible goals that lead to avoiding shutdown.

We made various simplifying assumptions, and we would like to see future work relaxing some of these assumptions and investigating how likely they are to hold:
\begin{itemize}
    \item The agent learns a goal during the training process
    \item The learned goal is randomly chosen from the training-compatible goal set $\tcset$ 
    \item Finite state and action spaces
    \item Rewards are nonnegative
    \item High discount factor $\gamma$
    \item Significant distributional shift: no training states are reachable from the new state $s_\new$
\end{itemize}

\paragraph{Acknowledgements.} Thanks to Rohin Shah, Mary Phuong, Ramana Kumar, Geoffrey Irving, and Alex Turner for helpful feedback.

\bibliography{references}
\bibliographystyle{plainnat}

\end{document}